\newtheorem{theorem}{Theorem}[section]
\newtheorem{proposition}[theorem]{Proposition}
\newtheorem{definition}[theorem]{Definition}
\title{Exploration of Numerical Precision in Deep Neural Networks}
\author{Zhaoqi Li, Yu Ma, Catalina Vajiac, Yunkai Zhang}
\date{\today}
\begin{document}

\begin{abstract}
Reduced numerical precision is a common technique to reduce computational cost in many Deep Neural Networks (DNNs). While it has been observed that DNNs are resilient to small errors and noise, no general result exists that is capable of predicting a given DNN system architecture's sensitivity to reduced precision.  In this project, we emulate arbitrary bit-width using a specified floating-point representation with a truncation method, which is applied to the neural network after each batch. We explore the impact of several model parameters on the network's training accuracy and show results on the MNIST dataset. We then present a preliminary theoretical investigation of the error scaling in both forward and backward propagations. We end with a discussion of the implications of these results as well as the potential for generalization to other network architectures. 
\end{abstract}

\maketitle
\section{Introduction}

Despite the advances in hardware and the usage of GPUs nowadays, training a deep neural network (DNN) is still extremely computationally expensive, sometimes taking up to a few months. Most of the memory occupied by DNN attributes to the weight matrices that encode the information of the network, which is primarily represented in 32 bits (single precision). Intuitively, reducing the precision requirement cuts down the amount of data stored, which in turn shortens runtime for compute-bound devices. Not only does reduced precision increase the capacity of devices, it also speeds up the data transferring process, which is a major factor in distributed algorithms. If successfully applied to DNNs, reduced precision would be proven much use in various areas, such as mobile devices.

Neural networks and machine learning algorithms tend to be resilient to error from reduced precision \cite{Gupta}. Therefore, the plausibility of reduced precision has already been investigated in some neural network architectures on standard machine learning datasets \cite{DBLP}. With stochastic rounding, MNIST and CIFAR-10 datasets can be trained up to state-of-art performance with all parameters truncated to 16 bits \cite{Gupta}. With the use of fixed point and dynamic fixed point formats, the parameters can even be truncated to 10 bits \cite{Courbariaux}. Binarized Neural Networks (BNNs), neural networks with binary weights and activations at runtime, and quantization methods are also studied and can achieve nearly state-of-the-art results \cite{DBLP}. 

However, explanations on resiliency and sensibility of reduced precision were primarily empirical \cite{Courbariaux, Ngiam}, and there are few studies on the topic of numerical stability \cite{Higham, Trefethen}. Furthermore, no previous estimate of precision tolerance has been established, and there is no concrete analysis about what aspects of neural networks are influenced by reduced precisions. 

This paper explores the resiliency of different parameters in a neural network to reduced precisions in terms of testing accuracy. We show that reduced precision is insensitive to numerous parameters, but sometimes harmful to network's architecture. In particular, we show that the test accuracy is sensitive to weight initialization and the number of layers in the convolutional neural network. Caution should be taken in the future when implementing reduced precision without a thorough understanding. Our results are based on the benchmark dataset MNIST. 

\section{Methods}

Numerical precision is defined as the measurement of the accuracy at which quantity is expressed and developed an arbitrary precision. Disregard practical concerns, we investigated low precision on a continuous mechanism, where we truncate a certain number of bits from the mantissa part of data representation. An example is illustrated below.

\begin{figure}[h!]
\centering
\begin{BVerbatim}
   0 0111 1111 1111 1100 1100 1100 1100 110   1.9875
&  1 1111 1111 1111 1110 0000 0000 0000 000   16-bit filter
   ----------------------------------------
   0 0111 1111 1111 1100 0000 0000 0000 000   1.984375
   
\end{BVerbatim}
\caption{Example of filter being applied to 32-bit float}
\label{fig:bitwise}
\end{figure}

We adopted truncation by batch to a standard convolutional neural network framework. It was designed to have two convolutional and pooling layers followed by a densely connected layer, shown in Figure~\ref{fig:framework}. 

\begin{figure}[h!]
\centering
\includegraphics[scale=0.35]{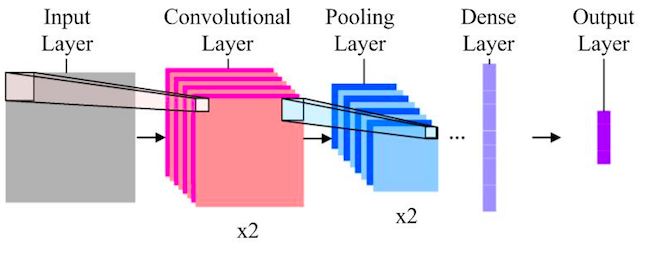}
\caption{A CNN Framework}
\label{fig:framework}
\end{figure}

Weights were first initialized to be small random numbers. ReLu is the activation function and softmax is used to process the output. Training data were propagated in mini-batches of size 128. Within each epoch, we truncated all weights after training. Our analysis of the results is first based on the final accuracy after a certain number of iterations (50 or 100). If it converges to a high level (95\% for MNIST), we then compared how fast it converges. We used Theano for implementing our framework as it is a comparatively open-source library.

We also implemented truncation by layer method which truncated all the weights after going through each layer. We show some results of this method in Section~\ref{Ch:FutureWork}. 

\section{Sensitivity Analysis of Neural Network Parameters}

To test the resilience of CNNs to reduced precision, we chose several parameters which hypothesized to be the major sources of CNN test error. We then perturbed these parameters one by one and analyze their effects under reduced precision. The parameters we investigated include number of layers (convolutional and dense), number of dense units, batch sizes, rounding schemes, and weight initialization conditions. We explored one parameter of interest at a time under different bit sizes. 

\subsection{Bitsize}

We would like to know the smallest bit size needed to converge in the default setting. We trained the CNN for 500 iterations, where an iteration is defined as one propagation of the entire dataset through the network using mini-batch training. The results are shown in Figure \ref{fig:cnn}.

\begin{figure}[ht!]
\includegraphics[width=4in]{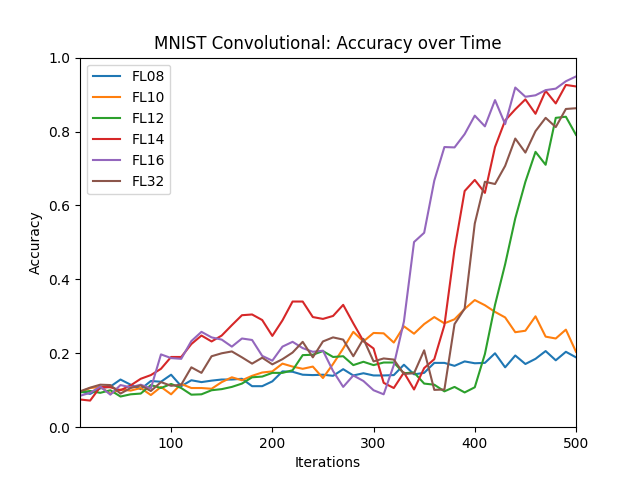}
\centering
\caption{Test accuracy vs. iterations for mini-batch CNN.}
\label{fig:cnn}
\end{figure}

The vertical axis represents the test accuracy and the horizontal axis represents the number of iterations, which we also denote as epochs later in this paper. Note that the accuracy never rised above 40\% when truncating to 8 or 10 bits, but at 12 bits or higher, the accuracy does climb to 80\%. This shows 12 bits as a turning point for MNIST on this specific network.
Unfortunately, we did not find this to be a general result. The network would not converge on CIFAR-10 when all the parameters were truncated to a small bit size as 12 bits. 

\subsection{Rounding Schemes}

To move from high to low precision, a standard routine of handling the excess digits is needed. One basic scheme is truncation, or simply cutting off the mantissa value after a certain number of bits. This will always lead to a smaller-than-original number. An alternative is stochastic rounding, which is a probabilistic rounding method defined as follows: 

\begin{align*}
\textrm{Pr}(x\to\left \lfloor{x}\right \rfloor) &= \left \lceil{x}\right \rceil - x\\
\textrm{Pr}(x\to\left \lceil{x}\right \rceil) &= x - \left \lfloor{x}\right \rfloor,
\end{align*}

where $\left \lfloor{x}\right \rfloor$ is the floor of $x$ and $\left \lceil{x}\right \rceil$ is the ceiling of $x$. In other words, if $x$ is close to $\left \lfloor{x}\right \rfloor$, it has a higher probability of rounding down, but it still has some chance of rounding up. In neural networks, many weights often have around the same value. This practice prevents all of them to be rounded up or down and thus effectively averages out the truncation error. 

We observed that stochastic rounding causes the network to converge at lower bit sizes where truncation fails. While stochastic rounding does not affect the final accuracy for high precision, it does provide faster convergence in many cases, though not all because of the intrinsic degree of randomness in the method. On average, stochastic rounding improved the rate of convergence by 25\%, as is shown in Figure~\ref{fig:stochastic_round}.

\begin{figure}[h!]
    \centering
    \begin{subfigure}[b]{0.4\textwidth}
        \includegraphics[width=0.9\textwidth]{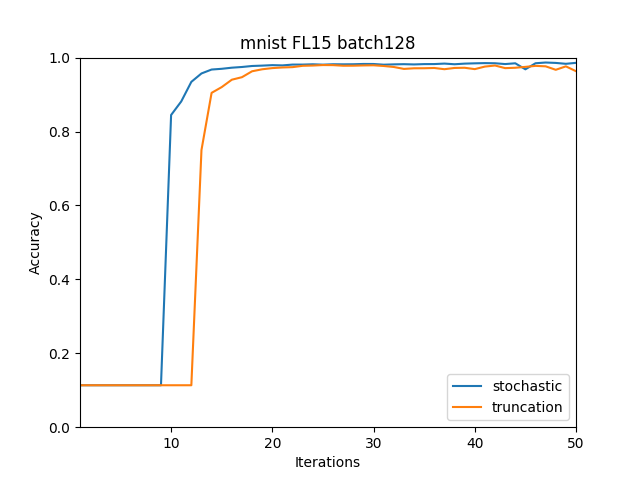}
        \label{fig:15bit}
    \end{subfigure}
    \begin{subfigure}[b]{0.4\textwidth}
        \includegraphics[width=0.9\textwidth]{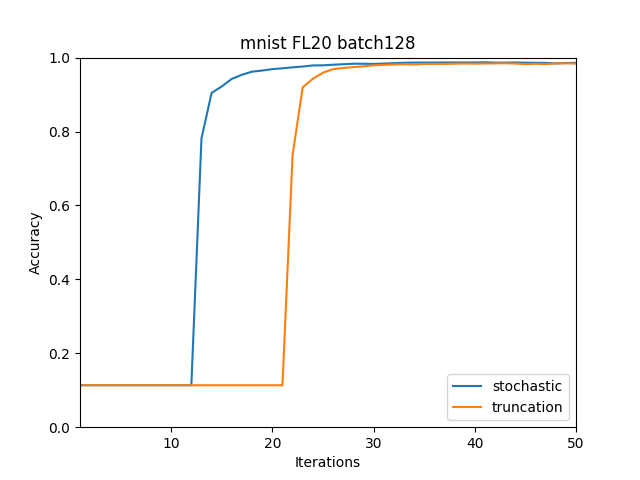}
        \label{fig:20bit}
    \end{subfigure}
    \caption{MNIST Test Accuracy vs. Rounding Scheme}
    \label{fig:stochastic_round}
\end{figure}

\subsection{Number of Dense Layers}\label{sec:layers}

Reduced precision experiments are commonly implemented on structures with many layers, since adding more layers generally improves their performance. However, increasing the number of layers also introduces more rounding errors. Unlike convolution layers, dense layers allow us to control the parameters more precisely. We studied the effects of having from one to five dense layers, each with 100 units.

\begin{figure}[!htb]
    \centering
    \begin{minipage}{.5\textwidth}
        \centering
        \includegraphics[scale=0.4]{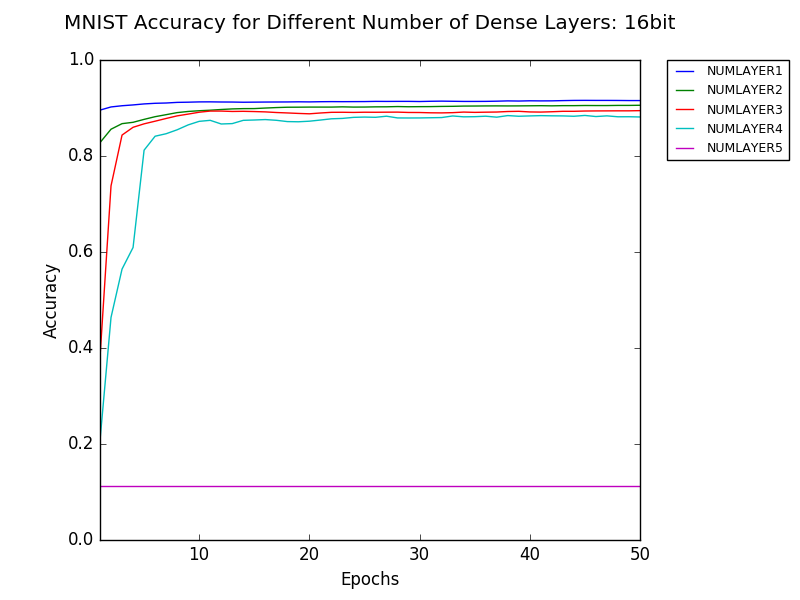}
    \end{minipage}%
    \begin{minipage}{0.5\textwidth}
        \centering
        \includegraphics[scale=0.4]{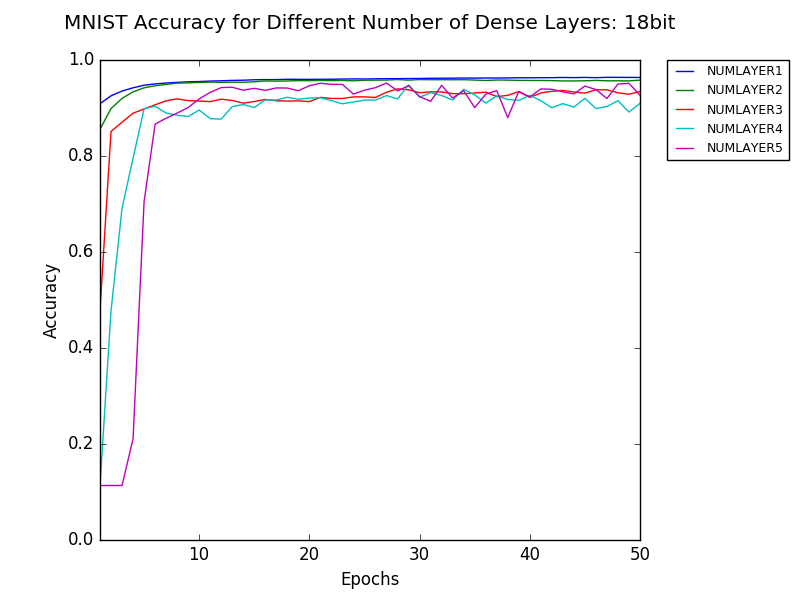}
    \end{minipage}
    \caption{MNIST Test Accuracy vs. Number of Dense Layers}
    \label{fig:acc_dense}
\end{figure}

Result in Figure~\ref{fig:acc_dense} shows that increasing bit size by two could change a network completely from poorly-trained to well-trained. Regardless of the number of layers, the test accuracies increase with the bitsize. However, as the number of layers goes up, the accuracy drops down. In particular, the neural network with five layers does not train when the bitsize is 16, and it fluctuates a lot when the bitsize is 18. Thus, networks with more layers are more sensitive to the number of bits. The reason is that the round-off error tends to accumulate as the number of layers goes up. A detailed error analysis is provided in Section \ref{Ch:analysis}.

\subsection{Number of Dense Units}

The number of dense units represents the number of neurons in a fully connected layer. We tested 160, 130, 110, 100, 90, 70 and 40 units per dense layer (Figure~\ref{fig:dense_unit}). Among all bit sizes, we see that the number of dense units is independent of final accuracy. This implies that a well-trained model may not require the most dense units, which could lead to a more memory-efficient implementation.

\begin{figure}[!htb]
    \centering
    \begin{minipage}{.5\textwidth}
        \centering
        \includegraphics[scale=0.4]{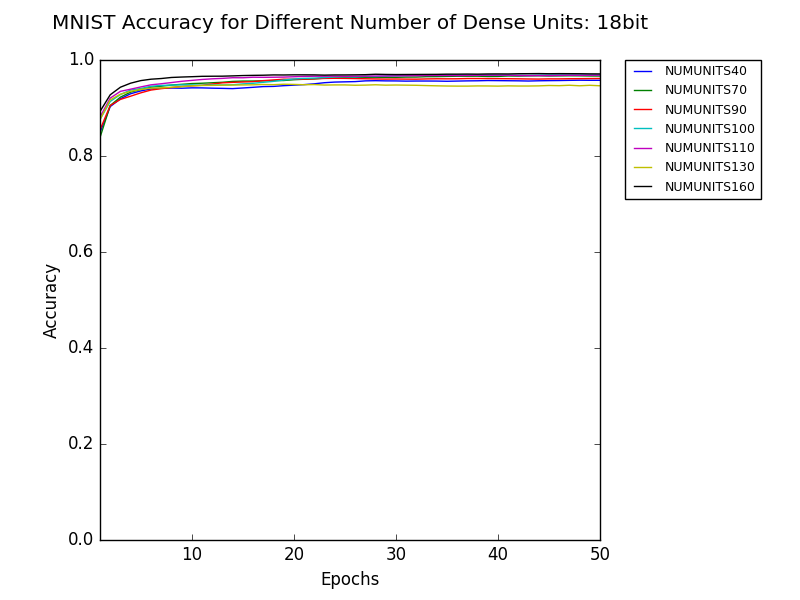}
        \caption{Accuracy vs. Dense Units}
        \label{fig:dense_unit}
    \end{minipage}%
    \begin{minipage}{0.5\textwidth}
        \centering
        \includegraphics[scale=0.4]{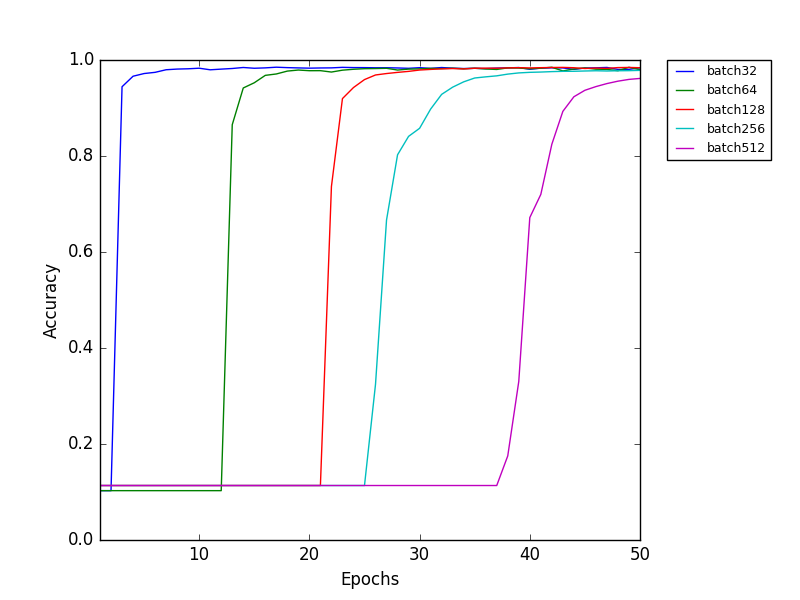}
        \caption{Accuracy vs. Batch Size}
        \label{fig:batch_size}
    \end{minipage}
\end{figure}

\subsection{Batch Size}

Batch size is the number of samples to propagate through algorithm at a time. We implemented batch sizes of 32, 64, 128, 256, 512. We observed that the accuracy is unaffected when bit size varies. However, larger batch sizes lead to slower convergence. The result is shown in Figure~\ref{fig:batch_size}. This result is intuitive since batch size represents the number of inputs. As batch size increases, the network receives more data and needs more operations in each layer. Thus, the error accumulates, which drops the converging speed. 

\subsection{Weight Initialization}
The way weights are initialized prior to training also affects the final accuracy of a network \cite{IG}. We thus tested if weight initialization is also sensitive to reduced precision. Because the symmetries of neurons can cause synchronization in the learning process, we used random initial weights instead of uniform weights. Perturbation is achieved by adding a small constant to the fixed random weight initialization (Figure~\ref{fig:acc_init_weight}). 

\begin{figure}[h!]
\centering
\begin{subfigure}[b]{0.4\textwidth}
        \includegraphics[width=\textwidth]{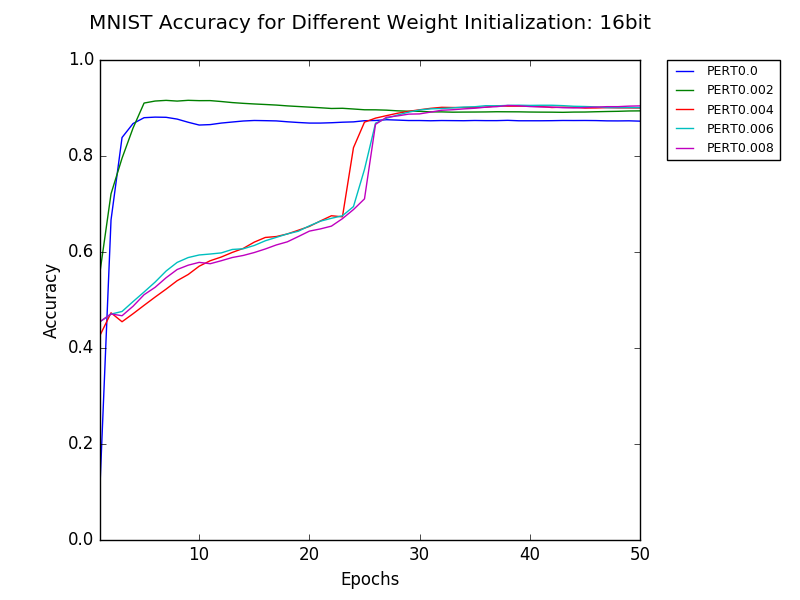}
    \end{subfigure}
    \begin{subfigure}[b]{0.4\textwidth}
        \includegraphics[width=\textwidth]{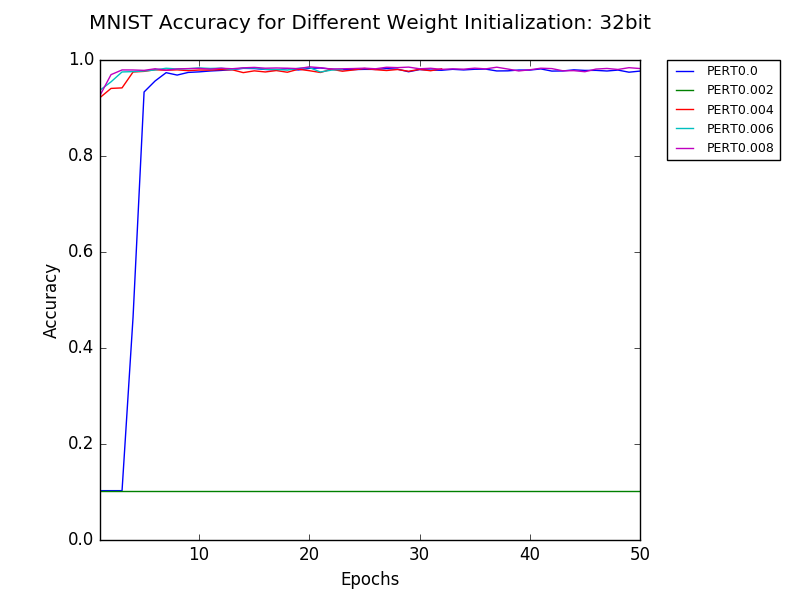}
    \end{subfigure}
    \caption{MNIST Test Accuracy vs. Initial Weights}
    \label{fig:acc_init_weight}
\end{figure}

From Figure~\ref{fig:acc_init_weight}, regardless of the precision, the model converges quickly when perturbation is less than 0.002, while it converges significantly slower when perturbation is larger than 0.004. This result calls up a caution when implementing reduced precision. As reducing precision perturbs the initial weight by truncating the numbers down, the direct impact of weight initialization on neural network accuracy could imply potential harmfulness. 

\subsection{Conclusion}
Reducing the numerical precisions has the following impacts on the parameters we investigated in:
\begin{itemize}
\item In general, the final test accuracy is lower for small bit sizes. 
\item Stochastic rounding converges faster than truncation.
\item Increasing the number of layers tends to affect test accuracy negatively.
\item The number of units in each fully connected layer is independent of test accuracy. 
\item Larger batch sizes take longer to converge, while the accuracy remains the same. 
\end{itemize}

\section{Error Analysis}\label{Ch:analysis}

This section provides a theoretical investigation of how the round-off error accumulates during the forward and backward propagation process in a convolutional neural network. We only present results for the forward propagation. The backpropagation is more complicated and is shown in the Appendix~\ref{sec:backprop}. We use the truncation method described above for the analysis. 

In this analysis, we focus on the convolution and pooling process, while omitting the regularization term, as it is independent of the data. Since pooling does not introduce rounding error, we focus on the convolution. We use a discretized version of convolution, following the definition of \cite{IG}. 

\vspace{-1pt}

\begin{definition}[Discretized Convolution]\label{eq:conv}
Denote $I$ as the inputs to the convolutional layer, $W$ as the weight matrix, and $S$ as the outputs of the layer, then
\[S(i, j) = \sum_{m}\sum_{n}I(i+m, j+n)W(m, n).\]
\end{definition}

Denote $\varepsilon$ as the error of $x$, and $\tilde{x}$ as the approximation of the true value $x$. Let $S_i$ be the output of the $i^{th}$ layer, $W_i$ be the weight matrix of the $i^{th}$ layer, while $M_i$ and $N_i$ denote the height and width of the filter. The result is shown below. 

\begin{proposition}\label{prop:Sn_approx}
Let $M_0=N_0=1$ and $W_0=I$. Given that $W_i\neq0$ for every $i$, 
\begin{align*}
\tilde{S_n}(i, j)\approx S_n(i, j) - \left(\prod_{i=0}^n M_iN_iW_i\right)\cdot \left(\sum_{i=0}^n \frac{1}{W_i}\right)\varepsilon.
\end{align*}

\end{proposition}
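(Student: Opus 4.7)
The plan is to proceed by induction on the layer index $n$, using Definition \ref{eq:conv} throughout and retaining only terms that are linear in the truncation error $\varepsilon$.

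For the base case $n=0$, the conventions $M_0=N_0=1$ and $W_0=I$ give $\tilde{S}_0=\tilde{I}\approx I-\varepsilon$, which matches the right-hand side of the claim since $\bigl(\prod_{i=0}^{0} M_iN_iW_i\bigr)\bigl(\sum_{i=0}^{0} 1/W_i\bigr)\varepsilon = I\cdot(1/I)\cdot\varepsilon = \varepsilon$. For the inductive step I would assume the formula at level $n-1$, abbreviating the accumulated error as $E_{n-1} = \bigl(\prod_{i=0}^{n-1} M_iN_iW_i\bigr)\bigl(\sum_{i=0}^{n-1} 1/W_i\bigr)\varepsilon$, and model the layer-$n$ truncation by $\tilde{W}_n \approx W_n - \varepsilon$. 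Substituting into Definition \ref{eq:conv} and expanding the product $(S_{n-1}-E_{n-1})(W_n-\varepsilon)$ inside the double sum, then discarding the $O(\varepsilon^2)$ cross term $E_{n-1}\varepsilon$, should yield
\begin{equation*}
\tilde{S}_n(i,j) \;\approx\; S_n(i,j) \;-\; M_nN_n\, W_n\, E_{n-1} \;-\; M_nN_n\, S_{n-1}\,\varepsilon.
\end{equation*}

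To close the recursion I would unfold Definition \ref{eq:conv} under the convention that identifies each $W_i$ with a single representative magnitude, which gives the telescoping estimate $S_k \approx \prod_{i=0}^{k} M_iN_iW_i$. Substituting this expression for $S_{n-1}$, factoring $\prod_{i=0}^{n} M_iN_iW_i$ out of both first-order error terms, and combining the running sum $\sum_{i=0}^{n-1} 1/W_i$ with the extra $1/W_n$ contributed by the $M_nN_nS_{n-1}\varepsilon$ piece produces $\prod_{i=0}^n M_iN_iW_i\cdot\sum_{i=0}^n 1/W_i\cdot\varepsilon$, which is precisely the stated approximation.

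The main obstacle is legitimizing what I am calling the \emph{scalar collapse}: the proposition's notation implicitly identifies the matrix $W_i$ with a single number, and the convolution sums are handled as if every entry of a given filter or feature map equaled its representative value. Being careful about exactly what $\approx$ is meant to capture --- first order in $\varepsilon$, uniform representative entries for weights and activations, or both --- is the delicate point. Once those conventions are fixed, the induction itself is essentially bookkeeping, and the key structural insight is simply that each layer contributes an additive $1/W_n$ term to the error sum while multiplying the common prefactor by $M_nN_nW_n$.
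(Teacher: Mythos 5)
Your proposal is correct and follows essentially the same route as the paper's own proof: an induction that expands $(S_{n-1}-E_{n-1})(W_n-\varepsilon)$ inside the convolution sum, drops the $O(\varepsilon^2)$ terms, and closes the recursion via the telescoping estimate $S_{k}\approx\prod_{i=0}^{k}M_iN_iW_i$ under the same ``representative magnitude'' convention. The only cosmetic differences are that you anchor the induction at $n=0$ rather than $n=1$ and that you make the scalar-collapse convention explicit where the paper leaves it implicit.
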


\begin{proof}
We use proof by induction. 

When $n=1$, 

\begin{align}\label{eq:conv_approx}
\tilde{S_1}(i, j) &= \sum_{m}\sum_{n}\tilde{I}(i+m, j+n)\tilde{W_1}(m, n)\nonumber \\
&=\sum_{m}\sum_{n}\big(I(i+m, j+n)-\varepsilon\big)\big(W_1(m, n)-\varepsilon\big)\nonumber\allowdisplaybreaks \\
&=\sum_{m}\sum_{n}\big(I(i+m, j+n)W_1(m, n)-(I(i+m, j+n)+W_1(m, n))\varepsilon+\varepsilon^2\big)
\end{align}

Assume that each entry in $I_i$ and $W_i$ are of the same order, we have $\displaystyle \sum_{m}\sum_{n}I(i+m, j+n) = M_1N_1I$ and $\displaystyle \sum_{m}\sum_{n}W_1(m, n) = M_1N_1W_1$. It follows that each entries in $S_1$ has the same order, namely, $S_1(i+m, j+n)\approx M_1N_1IW_1$.

Since $\varepsilon\ll I$ and $\varepsilon\ll W_1$, we expand the sum, omit the second order term, and get 

\begin{align}\label{eq:S1_approx}
\tilde{S_1}(i, j) &= S_1(i,j)-M_1N_1(I+W_1)\varepsilon.
\end{align}

Assume that this equation holds for $n=k-1$. For the case of $n=k$, for simplicity we let $\displaystyle T_k=\left(\prod_{i=0}^{k} M_iN_iW_i\right)\cdot \left(\sum_{i=0}^{k} \frac{1}{W_i}\right)$, and we have
\begin{align*}
\tilde{S_k}(i, j) &= \sum_{m}\sum_{n}\tilde{S}_{k-1}(i+m, j+n)\tilde{W_k}(m, n)\\
&\approx \sum_{m}\sum_{n}\left(S_{k-1}(i+m,j+n) - T_{k-1}\varepsilon\right)\big(W_k(m, n)-\varepsilon\big)\\
&=S_k(i, j)-\sum_{m}\sum_{n}S_{k-1}(i+m,j+n)\varepsilon
 -\sum_m \sum_n T_{k-1} W_k(m, n)\varepsilon + \sum_m \sum_n T_{k-1}\varepsilon^2\\
&\approx S_k(i, j) - \sum_m \sum_n \left(\prod_{i=0}^{k-1} M_iN_iW_i + \left(\prod_{i=0}^{k-1} M_iN_iW_i\right) \left(\sum_{i=0}^{k-1}\frac{1}{W_i}\right)W_k\right)\varepsilon\\
&= S_k(i, j) - \sum_m \sum_n \left(\prod_{i=0}^{k-1} M_iN_i\right) \left(\prod_{i=0}^k W_i \cdot \frac{1}{W_k}\varepsilon + \prod_{i=0}^k W_i \left(\sum_{i=0}^{k-1} \frac{1}{W_i}\right) \varepsilon\right) \\
&\approx S_k(i, j) - \left(\prod_{i=0}^k M_iN_iW_i\right)\cdot \left(\sum_{i=0}^k \frac{1}{W_i}\right)\varepsilon.
\end{align*}
\end{proof}

As shown above, the forward propagation error scales linearly with the dimensions of weight matrices. In terms of layers, the error tends to accumulate even more quickly as the number of layers goes up. Therefore, increasing the number of layers indeed introduces a lot of rounding error, thus drops down the accuracy, as is shown in Section~\ref{sec:layers}.

\section{Future Work}\label{Ch:FutureWork}

\subsection{More Truncation Methods}
Our work has only involved truncation by batch, where the weights are truncated as they are updated after each iteration. A possible next step is to truncate more frequently, which would more closely resemble a hardware restriction where all calculations could only be performed with low precision. Unfortunately, both truncation by layer and truncation by elementary operations are significantly more complicated than truncation by batch. We implemented the truncation by layer method and present some preliminary results in the following section. 

\subsubsection{Truncation by Layer}

We tested the truncation by layer method on the number of dense layers by varying the number of layers while fixing the other parameters. The results are shown in Figure~\ref{fig:dense_layer_by_layer}. 

\begin{figure}[!htb]
    \centering
    \begin{minipage}{.5\textwidth}
        \centering
        \includegraphics[scale=0.4]{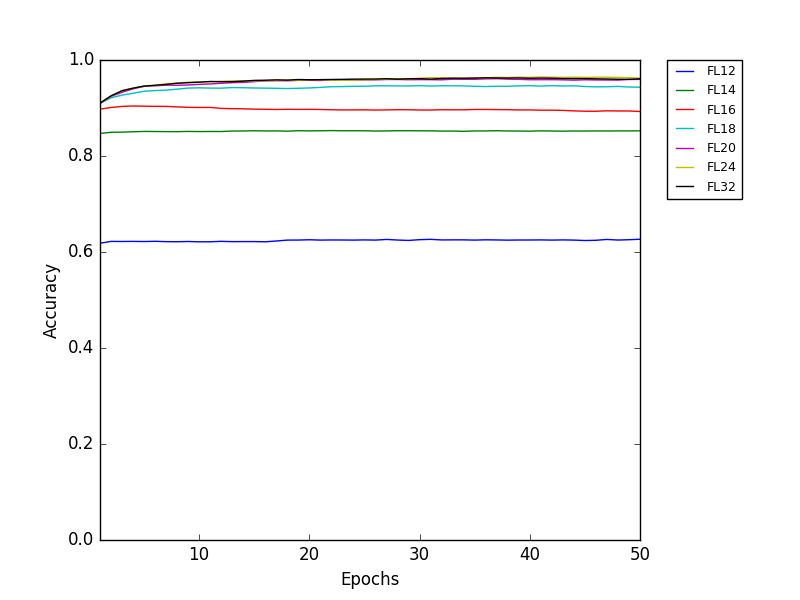}
    \end{minipage}%
    \begin{minipage}{0.5\textwidth}
        \centering
        \includegraphics[scale=0.4]{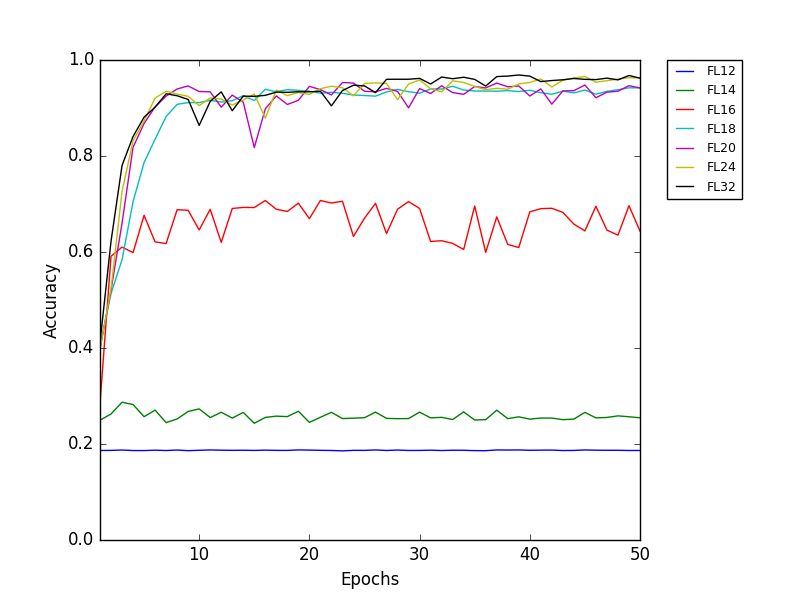}
    \end{minipage}
    \caption{Test Accuracy vs. Dense Layers Using Truncation by Layer}
    \label{fig:dense_layer_by_layer}
\end{figure}

Similar to Section~\ref{sec:layers}, Figure~\ref{fig:dense_layer_by_layer} shows that the test accuracy drops down as the number of layers goes up. In particular, the network with bitsize 16 changes from well-trained to ill-trained when the number of layers goes up from one to four. Also, noticing that in Figure~\ref{fig:acc_dense}, the network with bitsize 16 and four layers is still well-trained using the truncation by batch method, truncation by layers has a higher bitsize requirement to train the network well. This further motivates us to implement truncation by basic operation. 

\subsubsection{Truncation by Basic Operation}
\hspace{16pt} To closely represent hardware limitations where no computations could be done in higher precision, it would be useful to implement truncation after each basic arithmetic operation. However, from our experience, this is impossible in Theano, since each function performs many basic operations. We would have to implement our neural network without using any machine learning libraries, which time would not permit in our case. However, the results would be very interesting to see.

Currently, we have two possible ideas to solve this problem. The first one was to incorporate the SymPy package into Theano functions. SymPy is a package for representing mathematical equations symbolically. If successful, SymPy would have allowed rounding to be performed when evaluating symbolic graphs. However, all operations must be revised to use the SymPy package instead of NumPy which Theano uses conventionally. We were not able to get Theano and SymPy to work in conjunction.

Our second idea was to override the gradient function using finite differences. Although the finite differences method can take numerical data as input, it is considerably more computationally expensive than Theano's gradient function. Moreover, using finite differences would introduce truncation error in addition to the rounding error. Since we would be dealing with various levels of numerical precision, determining the appropriate value of $\epsilon$ to use for each precision will be very time-consuming. On the other hand, it is a potential future direction to implementing more strict truncation methods while using Theano.

\subsection{More Variations to Explore}
\hspace{16pt} Most of our conclusions are based on MNIST, which is relatively simple. Furthermore, our neural network has only two convolution layers and one dense layer. In the future, data sets such as SVHN and CIFAR-100 should be explored to further validate our results. As of frameworks and architectures, we should explore many state-of-art architectures such as LeNet, GoogLeNet, or VGG16 to generalize our results. Also, we have only obtained test error accuracy but have not looked into training errors. We would like to see the differences of training errors on the parameters which we investigated, which shows if our results are affected by overfitting. We could also experiment with different algorithms since we are now only running own data on RMSprop. A few options include Adam, stochastic gradient descent, and momentum. Since we initialize our weights with a random seed, we could also test other seeds to observe if the conclusions still hold. Last but not least, as we have only investigated the error analysis when considering truncation by elementary operation, we plan on to investigate truncation by batch, truncation by layer and stochastic rounding to further validate our results. We also plan to conduct error analysis using rounding methods or stochastic rounding methods, which are more commonly implemented in today's neural network architectures. 

\section*{Acknowledgement}

This research was carried out as part of the 2017 RIPS program at IPAM, the University of California, Los Angeles, and was supported by NSF grant DMS-0931852. We would like to thank Hangjie Ji, Nicholas Malaya, Allen Rush for their mentorship, support, and valuable advice. We would also like to thank Dimi Mavalski and Susana Serna for their help on organizing the RIPS program. We thank AMD Company for their sponsorship and support.

\bibliographystyle{siam}     

\renewcommand\bibname{Selected Bibliography Including Cited Works}
\nocite{*}
\bibliography{Biblio}

\newpage

\appendix

\section{Back Propagation Error Analysis}\label{sec:backprop}

\hspace{16pt} During the training process, back propagation uses the results from forward propagation to update the weight and bias variables of the network. A commonly used method is gradient descent. During the update process, we first compute the gradients of the cost function with respect to the weight and bias variables in each layer. The new weight variables will be obtained by subtracting a product of the gradient and learning rate (a preset constant) from the original values. Computing the gradients requires the chain rule, which complicates this analysis comparing to that for forward propagation.\\\\
Any regularization terms are again avoided and the squared error measure is used for simplicity. Denote $y$ as the output, $y_0$ as true value of test data, $W^{(k)}$ as the weight matrix of the $k^{th}$ layer, $b^{(k)}$ as the bias vector of $k^{th}$ layer, $a^{(k)}$ as the output (after activation) of the $k^{th}$ layer, and $z^{(k)}$ as the output (before activation) of the $k^{th}$ layer. We then have $z^{(k)} = W^{(k)}a^{(k-1)}+b^{(k)}$) and $J$ as the cost function. We follow the algorithms on \cite{IG} and \cite{Git} described as follows:


\begin{enumerate}
\item Compute $$g \leftarrow \nabla_yJ = \nabla_y(\frac{1}{2}(y_0-y)^\top (y_0-y)) = y_0 - y $$
\item Compute $$g \leftarrow \nabla_{z^{(n)}}J = g \cdot \nabla_{z^{(n)}}y = (y_0-y)\odot(y-y^2)$$ (Here $\odot$ is element-wise multiply)
\item Compute 
\begin{align*}
\nabla_{b^{(n)}}J &= g\\
\nabla_{W^{(n)}}J &= g \otimes a^{(n-1)\top}
\end{align*}
(Here $\otimes$ is the outer product)
\item Compute $$g \leftarrow \nabla_{a^{(n-1)}}J = W^{(n)\top} \cdot g$$
\item Repeat Steps 2 to 4 for $n-1$ and so on.
\end{enumerate}

Assuming that the test data are correctly stored, for step 1 we have 
\begin{align}
\tilde{g} &= y_0 - \tilde{y} = g - \varepsilon_y
\end{align}

For step 2 we have 
\begin{align}
\tilde{\nabla}_{z^{(n)}}J &= (g-\varepsilon_y)\cdot (y-\varepsilon_y - (y - \varepsilon_y)^2)\\
&= (g-\varepsilon_y)\cdot (y-\varepsilon_y)(1-y+\varepsilon_y)\\
&= \nabla_{z^{(n)}}J + (-g+2gy-y+y^2)\varepsilon_y + (-g+1-2y)\varepsilon_y^2 + \varepsilon_y^3\label{eq:grad_approx}
\end{align}

From equation~\eqref{eq:grad_approx}, we can see that the error may or may not scale linearly depending on the values of $g$ and $y$. Therefore, predicting the dependencies on previous layers is much more complicated, and so is the approximation for the scaling of back propagation error.

In conclusion, since forward propagation error is guaranteed to have a linear scaling, forward propagation process is dominating the sensitivity of reduced precision compared to back-propagation when truncation is applied. A potential reason why back-propagation is usually observed to cause the error might be that after forward propagation, the accumulation error is already close to the breaking threshold. Adding an extra back propagation error causes the accuracy to fall apart completely.

\end{document}